\def\eqref#1{equation~\ref{#1}}
\def\1{\bm{1}}
\DeclareMathAlphabet{\mathsfit}{\encodingdefault}{\sfdefault}{m}{sl}
\SetMathAlphabet{\mathsfit}{bold}{\encodingdefault}{\sfdefault}{bx}{n}
\newtheorem{prop}{Proposition}
\newtheorem*{proof}{Proof}
\title{Empirical Studies on the Properties of Linear Regions in Deep Neural Networks}
\author{Xiao~Zhang \& Dongrui~Wu\thanks{Dongrui~Wu is the corresponding author.} \\
Ministry of Education Key Laboratory of Image Processing and Intelligent Control,\\
School of Artificial Intelligence and Automation,\\ Huazhong University of Science and Technology, Wuhan, China \\
\texttt{\{xiao\_zhang,drwu\}@hust.edu.cn}
}
\begin{document}

\maketitle

\begin{abstract}
A deep neural network (DNN) with piecewise linear activations can partition the input space into numerous small linear regions, where different linear functions are fitted. It is believed that the number of these regions represents the expressivity of the DNN. This paper provides a novel and meticulous perspective to look into DNNs: Instead of just counting the number of the linear regions, we study their local properties, such as the inspheres, the directions of the corresponding hyperplanes, the decision boundaries, and the relevance of the surrounding regions. We empirically observed that different optimization techniques lead to completely different linear regions, even though they result in similar classification accuracies. We hope our study can inspire the design of novel optimization techniques, and help discover and analyze the behaviors of DNNs.
\end{abstract}

\section{Introduction} \label{sect:Introduction}

In the past few decades, deep neural networks (DNNs) have achieved remarkable success in various difficult tasks of machine learning~\citep{Krizhevsky2012,Graves2013,Goodfellow2014,He2016,Silver2017,Devlin2019}. Albeit the great progress DNNs have made, there are still many problems which have not been thoroughly studied, such as the expressivity and optimization of DNNs.

High expressivity is believed to be one of the most important reasons for the success of DNNs. It is well known that a standard deep feedforward network with piecewise linear activations can partition the input space into many linear regions, where different linear functions are fitted~\citep{Pascanu2014, Montufar2014}. More specifically, the activation states are in one-to-one correspondence with the linear regions, i.e., all points in the same linear region activate the same nodes of the DNN, and hence the hidden layers serve as a series of affine transformations of these points. As approximating a complex curvature usually requires many linear regions~\citep{Poole2016}, the expressivity of a DNN is highly relevant to the number of the linear regions.

Studies have shown that the number of the linear regions increases more quickly with the depth of the DNN than with the width~\citep{Montufar2014,Poole2016,Arora2018}. \citet{Serra2018} detailed the trade-off between the depth and the width, which depends on the number of neurons and the size of the input. However, a deep network usually leads to difficulties in optimization, such as the vanishing/exploding gradient problem~\citep{Bengio1994,Hochreiter1998} and the shattered gradients problem~\citep{Balduzzi2017}. Batch normalization (BN) can alleviate these by repeatedly normalizing the outputs to zero-mean and unit standard deviation, so that the scale of the weights can no longer affect the gradients through the layers~\citep{Ioffe2015}. Another difficulty is that the high complexity caused by the depth can easily result in overfitting. \citet{Srivastava2014} proposed dropout to reduce overfitting, which allows a DNN to randomly drop some nodes during training and work like an ensemble of several thin networks during testing.

Despite the empirical benefits of these techniques, their effects on the trained model and the reasons behind their success are still unclear. Previous studies focused on explaining why these techniques can help the optimization during training~\citep{Wager2013,Santurkar2018,Bjorck2018}. Different from theirs, our study is trying to answer the following question:
\begin{center}
    \emph{What properties do these techniques introduce to the linear regions after training?}
\end{center}
We present an intuitive view of different techniques' effects on the linear regions in Figure~\ref{fig:slice2D}. We may observe that:
\begin{enumerate}
\item BN and dropout help the DNN partition the input space into many more linear regions than the vanilla DNN.
\item The linear regions resulted from the BN model are more uniform in size than those from the dropout DNN.
\item Many transition boundaries of the dropout DNN share similar norm directions, and concentrate around the decision boundaries.
\end{enumerate}
Figure~\ref{fig:slice2D} illustrates that different optimization techniques can lead to completely different linear regions even using the same DNN architecture, which may influence the behaviors of the DNN, such as its adversarial robustness~\citep{Biggio2013,Szegedy2014} or some other undiscovered ones (Appendix~\ref{sect:Toy2D} shows another example on a two-dimensional toy dataset). Therefore, it is important to probe the properties of the linear regions introduced by these frequently-used optimization techniques, instead of just looking at the learning curves.

\begin{figure}[htbp] \centering
\includegraphics[width=0.7\columnwidth]{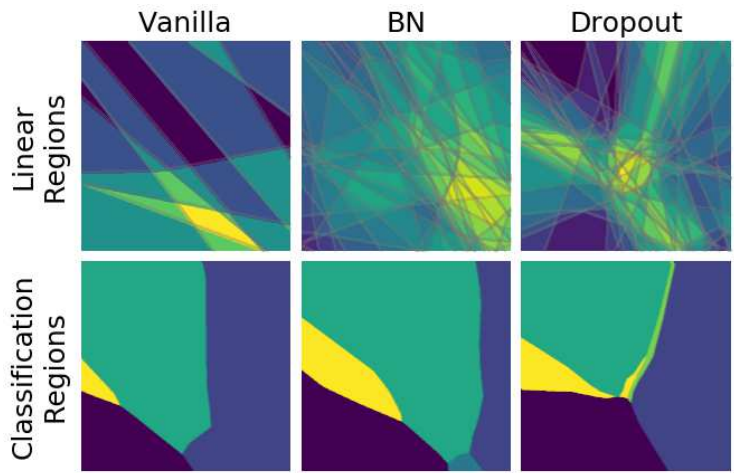}
\caption{Linear regions and classification regions of models trained with different optimization techniques. The gray curves in the top row are transition boundaries separating different linear regions, and the color represents the activation rate of the corresponding linear region. In the bottom row, different colors represent different classification regions, separated by the decision boundaries. The models were trained on the vectorized MNIST dataset, and this figure shows a two-dimensional slice of the input space. The vanilla model was a fully-connected ReLU network with three hidden layers. To make the figure more readable, each hidden layer only included 20 nodes. The BN model was the vanilla model with BN added after pre-activations of every hidden layer, and the dropout model added dropout layers after the hidden layers, with drop rate 0.2.}
\label{fig:slice2D}
\end{figure}

This paper introduces mathematical tools of polyhedral computation into linear region analysis, and systematically compares various properties of the linear regions after DNNs being trained with different optimization techniques (BN, dropout). Though the models have similar classification accuracies, we do observe significant differences among their linear regions:
\begin{enumerate}
  \item \emph{BN can help DNNs partition the input space into smaller linear regions, whereas dropout helps around the decision boundaries.}
  \item \emph{BN can make the norm directions of the hyperplanes orthogonal, whereas dropout the opposite.}
  \item \emph{Dropout makes regions in which data points lie less likely to contain the decision boundaries.}
  \item \emph{The gradient information of a linear region has high relevance to its surrounding regions; however, BN slightly reduces the relevance.}
\end{enumerate}
It should be noted that our approach can also be applied to analyzing the preferences introduced by other optimization techniques, such as different initialization approaches~\citep{Glorot2010,He2015}, different optimizers~\citep{Duchi2011,Kingma2014}, some widely-used operators like skip-connection~\citep{He2016}, and even different hyperparameters. Since training deep learning models is related to the complex interaction among the training dynamics, data manifold and model architectures, we believe it is better to decouple it into two subtasks: 1) \emph{figure out what kinds of linear regions can best approximate the data manifold}; and, 2) \emph{design optimization techniques to achieve such linear regions}. Our work provides tools to analyze the linear regions, which helps study both subtasks.

The remainder of the paper is organized as follows: Section~\ref{sect:LinearRegion} presents the details of the linear regions of DNNs with ReLU activation. Section~\ref{sect:Properties} shows how BN and dropout influence the linear regions. Section~\ref{sect:Conclusion} draws conclusions.

\section{Linear Regions} \label{sect:LinearRegion}

This section describes the detailed approach\footnote{We developed the approach for finding the linear regions independently, but later found that the same approach has been proposed in \citep{Lee2019}. However, the two papers use the linear regions for different purposes.} for searching for the linear regions of a given input, and introduces some basic concepts of convex polytopes.

\subsection{Search for the Linear Regions} \label{sect:approach}

A linear region is the intersection of a finite number of halfspaces. Therefore, to search for the linear region in which a given input point lies, we only need to search for its corresponding halfspaces, which are determined by the activation state of the DNN.

Let's consider a vanilla fully-connected DNN with $L$ hidden layers and ReLU activation. Mathematically, let $\mathbf{x} \in \mathbb{R}^d$ denote the $d$-dimensional vectorized input, $\mathbf{h}^l(\mathbf{x})$ the pre-activation outputs of the $l$-th hidden layer with $n_l$ nodes ($l=1,2,...,L$), $\mathbf{z}(\mathbf{x})$ the logits of the output layer, and $M$ the number of classes. Given an input example $\mathbf{x}^*$, because of the one-to-one correspondence relationship between a linear region and an activation state, its corresponding linear region is a set of inputs which lead to the same activation state of the DNN.

Let $S_l$ denote a set of the inputs whose activation states of the first $l$ hidden layers are the same as those of $\mathbf{x}^*$. Obviously, $S_{l+1}\subseteq S_l$. Consider the first layer. Since $\mathbf{h}^1:\mathbb{R}^d\rightarrow\mathbb{R}^{n_1}$ is simply an affine transform of $\mathbf{x}$, according to the activation state, $S_1$ can be written as:
\begin{align}
    S_1=\left\{\mathbf{x}\mid \mathbf{w}_i^T\mathbf{x}+b_i\geq0, \quad \forall i \in \{1,...,n_1\}\right\}, \label{eq:AddConstraints}
\end{align}
where
\begin{align}
    \mathbf{w}_i &= \mbox{sgn}(\mathbf{h}_i^1(\mathbf{x}^*))\nabla_{\mathbf{x}}\mathbf{h}_i^1(\mathbf{x}^*), \label{eq:w}\\
    b_i &= \mbox{sgn}(\mathbf{h}_i^1(\mathbf{x}^*))\left[\mathbf{h}_i^1(\mathbf{x}^*)
    -(\nabla_{\mathbf{x}}\mathbf{h}_i^1(\mathbf{x}^*))^T\mathbf{x}^*\right]. \label{eq:b}
\end{align}

Next, we select the points from $S_1$ to construct $S_2$, which activate the same nodes of the second layer as $\mathbf{x}^*$ does. Recall that the points in $S_1$ lead to the same activation state of the first layer, hence $\mathbf{h}^2:S_1\rightarrow\mathbb{R}^{n_2}$ is also an affine transform of $\mathbf{x}$. Therefore, $S_2$ can be constructed by adding linear inequality constraints to $S_1$, in the same way as (\ref{eq:AddConstraints}). Due to the linearity of $\mathbf{h}^l:S_{l-1}\rightarrow\mathbb{R}^{n_l}$ (where $S_0=\mathbb{R}^d$) for all $l$, this procedure can be repeated till the last hidden layer. The detailed deduction is presented in Appendix~\ref{sect:Parameters}, and the depth-wise exclusion process to search for a linear region of $\mathbf{x}^*$ is illustrated in Figure~\ref{fig:exclusion}.

Let $\mathcal{C^*}=\{(\mathbf{w}_i, b_i)\}_{i=1}^{\sum_{l=1}^{L}n_l}$ be the set of parameters of all the linear inequality constraints. Then $S_L$, which is the linear region that $\mathbf{x}^*$ lies in, can be represented as the intersection of $\sum_{l=1}^{L}n_l$ halfspaces:
\begin{align}
    S_L=\left\{\mathbf{x} \mid \mathbf{w}_i^T\mathbf{x}+b_i\geq0, \quad \forall (\mathbf{w}_i, b_i) \in \mathcal{C^*} \right\}, \label{eq:LinearRegion}
\end{align}
where the DNN is a locally linear model, as $\mathbf{z}:S_L\rightarrow\mathbb{R}^{M}$ is an affine transform of $\mathbf{x}$.

\begin{figure}[htbp] \centering
\includegraphics[width=0.8\columnwidth]{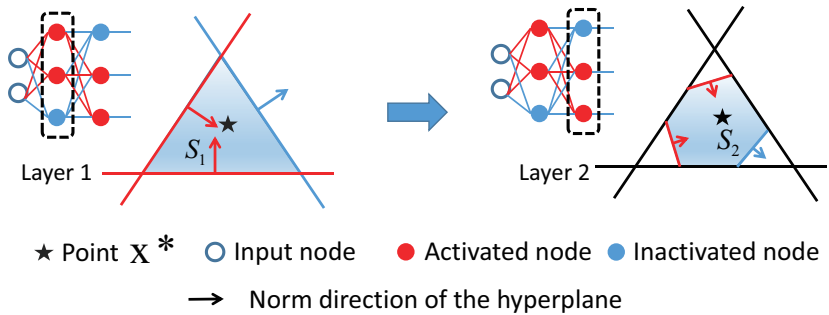}
\caption{Depth-wise exclusion process to search for the linear region of $\mathbf{x}^*$. Each node in each layer provides a hyperplane to cut the polytope defined by the preceding layers.}
\label{fig:exclusion}
\end{figure}

Though we mainly discuss the standard fully-connected DNNs, the form to describe a linear region can be extended to other situations with little modification. For instance, BN is just a reparameterization of the weights, a convolutional neural network (CNN) can be viewed as a sparse fully-connected DNN, and max-pooling is simply adding more linear inequalities to identify the maximum.

\subsection{Convex Polytopes}

A linear region can be represented as the set of solutions to a finite set of linear inequalities in (\ref{eq:LinearRegion}), which is exactly the \emph{H-representation} of a convex polyhedron. With the natural bounds of the input value, these linear regions are convex polytopes\footnote{Here we use the term \emph{polytope} to denote a bounded polyhedron.}. We omit the word \emph{convex} in the rest of the paper to avoid repetition.

A polytope can also be represented as a convex hull of a finite set of points, which is known as the \emph{V-representation}. Different representations may lead to different complexities when dealing with the same problem; nevertheless, it is challenging to convert one representation into the other~\citep{Toth2017}. As the dimensionality increases, it is even harder to calculate the number of vertices/facets with a single H-/V- representation~\citep{Linial1986}, or to verify the equivalence between an H-representation and a V-representation~\citep{Freund1985}. More information about polyhedral computation can be found here\footnote{\href{https://inf.ethz.ch/personal/fukudak/polyfaq/polyfaq.html}{https://inf.ethz.ch/personal/fukudak/polyfaq/polyfaq.html}}.

It should be noted that convex optimization is capable of probing the properties of the linear regions, because H-representation is a natural description of a convex feasible region, and a DNN's behavior is completely linear in this feasible region.

\section{Properties of the Linear Regions} \label{sect:Properties}

This section presents various properties of the linear regions of DNNs, which were trained using different optimization settings.

\subsection{Models}

We used a fully-connected DNN as our vanilla model, which consisted of three hidden layers with 1,024 rectified linear nodes in each. The BN model was the vanilla model with BN added after the pre-activations of each hidden layer, and the dropout model added dropout after the activations. We trained our models on the vectorized MNIST dataset, which were rescaled to $[-1,1]$.

All models were trained using Adam optimizer~\citep{Kingma2014} with the default setting ($\beta_1=0.9$, $\beta_2=0.999$) but different learning rates (1e-3 and 1e-4), and the batch size was 256. We used the Xavier uniform initializer for weight initialization, with biases set to zero~\citep{Glorot2010}. Early stopping was used to reduce overfitting. The classification accuracies are shown in Table~\ref{tab:Accuracy}. Since it is unlikely to explore the influence of all optimization settings, this paper emphasizes two optimization techniques, i.e., BN and dropout, and compares the influence of different learning rates.

\begin{table}[htbp] \centering
\caption{Test accuracies (\%) of different DNN models.} \label{tab:Accuracy}
    \begin{tabular}{c|ccc}
    \toprule
        Learning Rate   & Vanilla       & BN        & Dropout \\ \midrule
        1e-3            & 97.8          & 97.9      & 97.5 \\
        1e-4            & 98.0          & 98.3      & 98.2 \\
    \bottomrule
    \end{tabular}
\end{table}

As shown in Section~\ref{sect:LinearRegion}, the number of constraints in (\ref{eq:LinearRegion}) equals the number of hidden nodes; hence, it is hard to handle the constraints as the scale of DNN increases, such as removing the redundant inequalities. In our experiments, the number of constraints was $1,024\times3=3,072$. In addition to the natural bounds for the $28\times28=784$ pixel values ($\mbox{MIN}_x \leq \mathbf{x} \leq \mbox{MAX}_x$), each linear region is defined by $3,072+784\times2=4,640$ linear inequalities.

Our analysis of CNNs trained on the CIFAR-10 dataset is presented in Appendix~\ref{sect:AdditionalExperiments}.

\subsection{The Inspheres} \label{sect:Insphere}

We first probed the inspheres of the linear regions, which are highly related to the expressivity of a DNN. Though we cannot make a definite statement that a small insphere always leads to a small linear region, it does indicate a region's narrowness.

The insphere of a polytope can be found by solving the following convex optimization problem:
\begin{align}
    \max_{\mathbf{\overline{x}},r} \quad &r \\
    \textrm{s. t.} \quad &\mathbf{w}_i^T\mathbf{\overline{x}}-r\|\mathbf{w}_i\|+b_i\geq0, \quad \forall (\mathbf{w}_i, b_i) \in \mathcal{C^*}, \nonumber \\
                        &\mbox{MIN}_x + r \leq \mathbf{\overline{x}} \leq \mbox{MAX}_x - r, \nonumber
\end{align}
where $\mathbf{\overline{x}}$ is the center of the insphere, $r$ the inradius, and $\mathcal{C^*}$ the set of parameters of $S_L$ defined in (\ref{eq:LinearRegion}). The main idea of the optimization is to find a point in a polytope, whose distance to the nearest facet is the largest.

As shown in Figure~\ref{fig:slice2D}, linear regions vary according to their closeness to the decision boundaries, hence we studied the inspheres of three different categories of regions\footnote{These regions should not be confused with a \emph{classification region}, which is partitioned according to the classes of the points in the input space.}, which are defined as follows:
\begin{description}
  \item[Manifold region:] The region in which the test points lie.
  \item[Decision region:] The region which contains the \emph{normal decision boundaries}. We define a normal decision boundary as the classification boundary between one test point and the mean point of another class. When we test our models, the mean point of a class is classified into the same class for the MNIST dataset.
  \item[Adversarial region:] The region which contains the \emph{adversarial decision boundaries}. We define the adversarial decision boundary as the classification boundary between a test point and its adversarial example. The adversarial examples were generated by the projected gradient descent (PGD) method~\citep{Madry2018}.
\end{description}
Linear interpolation was used to search for regions which contain the decision boundaries. Let $\mathbf{x}_{test}$ denote the test point, and $\mathbf{x}_{target}$ the adversarial point or the mean point of another class. We searched for the maximum $\alpha\in[0,1]$ which causes $\mathbf{x}_{\alpha} = \alpha\cdot\mathbf{x}_{target}+(1-\alpha)\cdot\mathbf{x}_{test}$ to be classified into the same class as $\mathbf{x}_{test}$, and the region in which $\mathbf{x}_{\alpha}$ lies contains the decision boundaries.

As shown in Figure~\ref{fig:inradius}, different optimization settings introduce significantly different inradius distributions. Compared with the vanilla models, BN leads to the smallest inradiuses of the linear regions, whereas dropout, as shown in the second column of Figure~\ref{fig:inradius}, makes the decision regions narrower. It also shows that the inradius distributions of the adversarial regions are more similar to those of the manifold regions, which demonstrates the fundamental differences between the adversarial and the normal decision boundaries. Comparing the two rows in Figure~\ref{fig:inradius}, we can observe that a larger learning rate increases the variance of the distributions. However, BN seems to make the inradius distribution less sensitive to the learning rate, which is consistent with the observations in \citet{Bjorck2018} that BN enables training with larger learning rates. Comparing the first two columns of Figure~\ref{fig:inradius}, we can also find that the decision regions are usually narrower than the manifold regions, which was also observed by \citet{Novak2018} that on-manifold regions are larger than off-manifold regions. It is reasonable because decision boundary curves need more sophisticated approximation.

\begin{figure}[htbp] \centering
    \includegraphics[width=0.9\columnwidth]{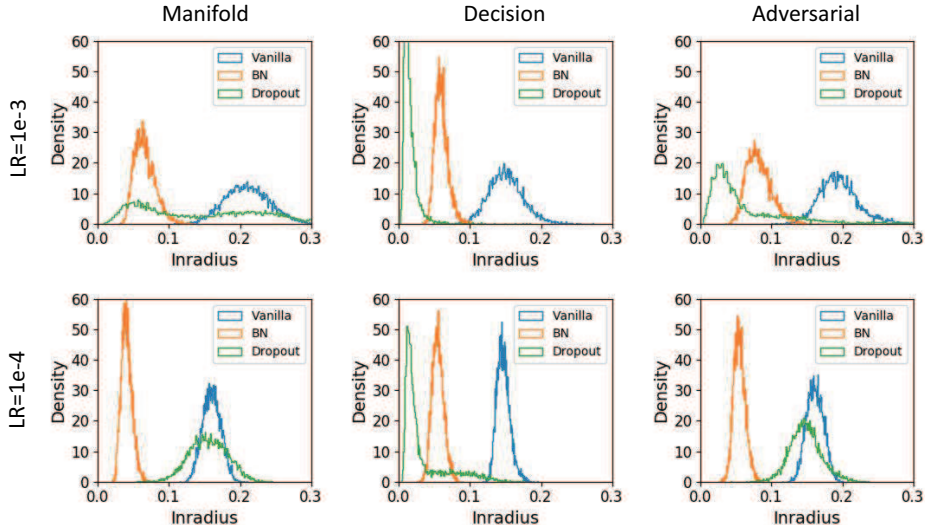}
\caption{Inradius distributions for different learning rates and optimization techniques. The models in the top row were trained with a larger learning rate (1e-3), and the bottom row a smaller learning rate (1e-4). The three columns show the inradius distributions of the manifold regions, the decision regions, and the adversarial regions, respectively.}
\label{fig:inradius}
\end{figure}

\subsection{Directions of the Hyperplanes}

Another observation is that BN makes the directions of the hyperplanes of a linear region orthogonal, whereas dropout the opposite. Since it is time-consuming to eliminate the redundant hyperplanes, as explained in Appendix~\ref{sect:RemoveRedundancy}, we calculated the angles using all hyperplanes.

The direction $\mathbf{w}$ of a hyperplane is pointing to the interior of the linear region, because all constraints are in the form of $\mathbf{w}^T\mathbf{x}+b\geq0$. Figure~\ref{fig:directions} shows the angles between directions of every two different hyperplanes of a manifold region. The directions of the hyperplanes become more consistent as the layer goes deeper. Dropout and large learning rates amplify this trend, whereas BN makes the directions more orthogonal. Though we only present one example here, the patterns can be observed for almost all manifold regions.

\begin{figure}[htbp] \centering
    \includegraphics[width=0.9\columnwidth]{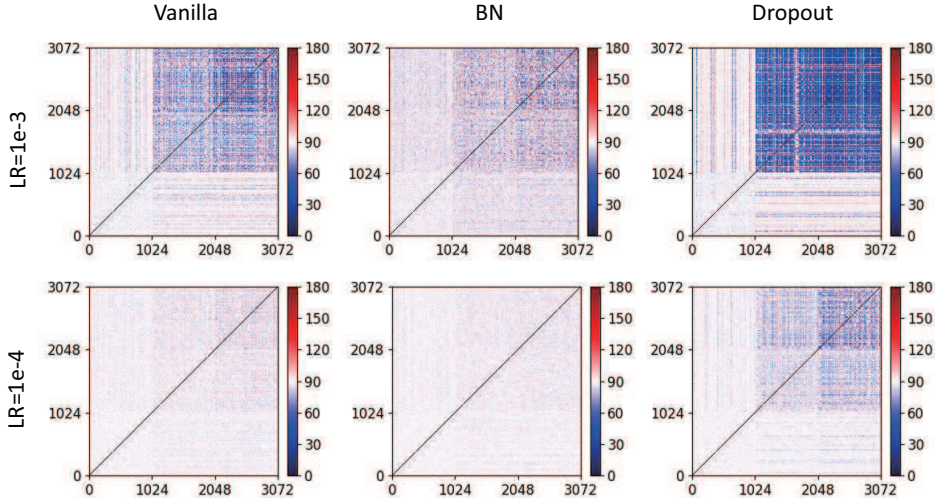}
\caption{Angles between the directions of different hyperplanes of a particular manifold region. The indices on both axes indicate different hyperplanes: 0-1,023 represent the hyperplanes of the first hidden layer,  1,024-2,047 the second layer, and 2,048-3,071 the third layer. The value of a pixel is $\arccos\frac{\mathbf{w}_i^T\mathbf{w}_j}{\|\mathbf{w}_i\|\|\mathbf{w}_j\|}$, i.e., the angle between $\mathbf{w}_i$ and $\mathbf{w}_j$. The models in the top row were trained with a larger learning rate (1e-3), and the bottom row a smaller learning rate (1e-4).}
\label{fig:directions}
\end{figure}

As shown in (\ref{eq:w}), the direction $\mathbf{w}$ is actually the gradient of the corresponding hidden node with respect to the input, hence the angle between the directions can be viewed as the correlation between the gradients. However, it should be noted that the shattered gradients problem~\citep{Balduzzi2017} cannot explain the decorrelation characteristics of BN, though they look similar. The shattered gradients problem studies the correlation between the gradients with respect to different inputs, whereas our observation is about the gradients of different hidden nodes with respect to a fixed input.

We also observed an interesting phenomenon: the orthogonality of the weights vanished during training. We tried to explicitly decorrelate the directions at initialization by introducing an orthogonal initializer~\citep{Saxe2014}, but the orthogonality was still broken after training. It is interesting to see if the properties of initialization can be preserved in training, which is one of our future research directions.

\subsection{Decision Boundaries in a Manifold Region} \label{subsect:xt}

Adversarial examples are benign inputs with imperceptible perturbations, which can dramatically degrade the performance of a machine learning model~\citep{Biggio2013,Szegedy2014}. Adversarial attacks aim to push a benign input across the nearest decision boundary~\citep{Szegedy2014,Goodfellow2015,Moosavi-Dezfooli2016,Carlini2017}, hence it is important to study the decision boundaries in a manifold region.

The most important question we may ask is: \emph{Do decision boundaries exist in a manifold region?} To address this question, we seek the point $\mathbf{x} \in S_L$ which has the maximum probability to be classified as Class $t\in\{1,2,...,M\}$, by solving the following optimization problem:
\begin{align}
    \max_{\mathbf{x}} \quad & \mathbf{z}_t(\mathbf{x}) - \log \left( \sum_{j=1}^M \exp(\mathbf{z}_j(\mathbf{x}))\right) \label{eq:advObj}\\
    \textrm{s. t.} \quad &\mathbf{w}_i^T\mathbf{x}+b_i\geq0, \quad \forall (\mathbf{w}_i, b_i) \in \mathcal{C^*},\nonumber \\
                        &\mbox{MIN}_\mathbf{x} \leq \mathbf{x} \leq \mbox{MAX}_\mathbf{x}, \nonumber
\end{align}
where $M$ is the number of classes, and $\mathbf{z}_j(\mathbf{x})$ the $j$-th entry of the logits $\mathbf{z}(\mathbf{x})$. When $\mathbf{x}$ satisfies the constraints, i.e., $\mathbf{x} \in S_L$, $\mathbf{z}:S_L\rightarrow\mathbb{R}^{M}$ is an affine function of $\mathbf{x}$, hence $\mathbf{z}_j(\mathbf{x})$ can be rewritten as:
\begin{align}
    \mathbf{z}_j(\mathbf{x}) = (\nabla_{\mathbf{x}}\mathbf{z}_j(\mathbf{x}^*))^T(\mathbf{x}-\mathbf{x}^*)+\mathbf{z}_j(\mathbf{x}^*),\quad \forall \mathbf{x}\in S_L
\end{align}
which implies that the optimization objective in (\ref{eq:advObj}) is concave. The classification region of Class $t$ exists in $S_L$ if and only if $\mathbf{x}^t$ is classified into Class $t$.

$\mathbf{x}^t$ lies on the faces of the polytope in most cases, as shown in Appendix~\ref{sect:OnFaces}. Since computing the volume of a polytope given by only the V-/H- representation is known as a \#P-hard problem~\citep{Dyer1988}, the distance between $\mathbf{x}^t$ and the original test point $\mathbf{x}^*$, called \emph{distortion} in this paper, is used to informally measure the size of a linear region:
\begin{align}
    distortion = \max_{t\in\{1,2,...,M\}}\|\mathbf{x}^t-\mathbf{x}^*\|.
\end{align}

We randomly selected 1000 test points to search for the classification regions in their manifold regions. The average number of classification regions and the average distortions are shown in Table~\ref{tab:DecisionBoundary}. The results show that a manifold region may contain several classification regions, which contradicts the conjecture in \citet{Hanin2019} that two points falling into the same linear region are unlikely to be classified differently. We can also find that training with a large learning rate, using dropout or BN can make the manifold regions less likely to contain decision boundaries. However, this is not the case for BN in CNNs (see Appendix~\ref{sect:AdditionalExperiments} for more information). In addition, BN can significantly decrease the distortion of $\mathbf{x}^t$, which implies a smaller size of the linear regions. Figure~\ref{fig:DecisionBoundary} shows a specific example of $\mathbf{x}^t$ generated from models trained with different optimization techniques, which also demonstrates our observation.

\begin{table}[htbp] \centering
\caption{Average number of classification regions in a manifold region, and the average distortions.} \label{tab:DecisionBoundary}
\begin{tabular}{c|c|ccc}
\toprule
    Measure                     & Learning Rate   & Vanilla     & BN        & Dropout \\ \midrule
    \multirow{2}{*}{Number}     & 1e-3            & 1.116       & 1.083     & 1.012 \\
                                & 1e-4            & 8.766       & 2.622     & 1.048 \\ \midrule
    \multirow{2}{*}{Distortion} & 1e-3            & 26.35       & 14.90     & 25.28 \\
                                & 1e-4            & 23.81       & 07.61     & 25.41 \\
\bottomrule
\end{tabular}
\end{table}

\begin{figure}[htbp] \centering
    \includegraphics[width=0.8\columnwidth]{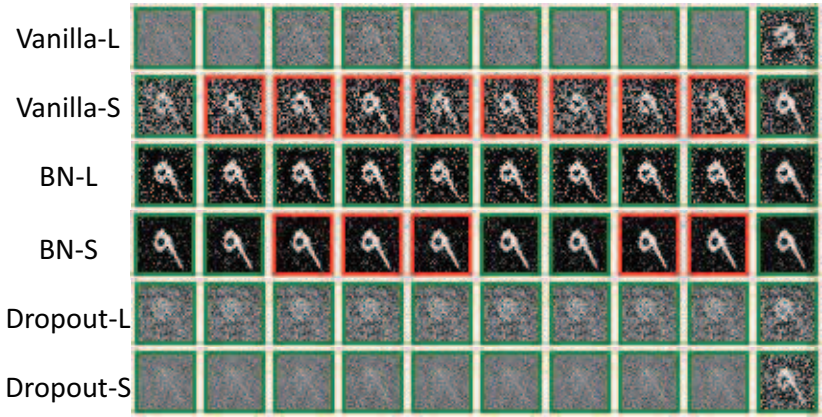}
\caption{Different $\mathbf{x}^t$ of the point $\mathbf{x}^*$, which is classified into Class $10$ (number \emph{9}). From left to right, $t$ varies from $1$ to $10$ (number $0$ to number $9$). Images with green borders are correctly classified into Class $10$, and red borders misclassified into Class $t$. Different rows represent the models trained with different optimization techniques or learning rates. The models whose names end with ``-L" were trained with a larger learning rate (1e-3), and ``-S" a smaller learning rate (1e-4).}
\label{fig:DecisionBoundary}
\end{figure}

Probing decision boundaries from the linear region perspective can bring many potential advantages for analysis. Due to the linearity of the manifold regions, high-order adversaries are not guaranteed to be better than first-order ones. Besides, adding random noise may help generate adversarial examples since more linear regions can be included during searching, which may explain why PGD with random restarts is a powerful adversarial attack approach~\citep{Madry2018}.

\subsection{Relevance of the Surrounding Regions}

The expressivity of a DNN depends on the number of the linear regions, but it makes optimization challenging when the gradient information of the surrounding regions has little relevance.

Given an example $\mathbf{x}^*$, we try to find an $\mathbf{x}$ from a different region, with $\mathbf{x}-\mathbf{x}^*$ parallel to all the decision boundaries of the region in which $\mathbf{x}^*$ lies, i.e., orthogonal to the directions of the decision boundaries. Then we compare the gradient information between $\mathbf{x}$ and $\mathbf{x}^*$. As the classification result of $\mathbf{x}$ is equal to $\arg\max_{j=\{1,2,...,M\}}\mathbf{z}_j(\mathbf{x})$, the directions of the decision boundaries can be written in the following form:
\begin{align}
    \nabla_{\mathbf{x}}\left(\mathbf{z}_j(\mathbf{x})-\mathbf{z}_{k}(\mathbf{x})\right),\quad j \neq k.
\end{align}

Therefore, to satisfy the orthogonality, we can sample an $L_2$-normalized direction $\mathbf{e}$ from the nullspace of the matrix $A=\left[\nabla_{\mathbf{x}}\mathbf{z}_1(\mathbf{x}^*), \nabla_{\mathbf{x}}\mathbf{z}_2(\mathbf{x}^*),...,\nabla_{\mathbf{x}}\mathbf{z}_M(\mathbf{x}^*)\right]$:
\begin{align}
    N(A)=\{\mathbf{e}\in\mathbb{R}^d \mid A^T \mathbf{e}=0, \|\mathbf{e}\|=1\},
\end{align}
and then $\mathbf{x}$ can be written as:
\begin{align}
    \mathbf{x}=\mathbf{x}^*+\beta \mathbf{e}, \quad \forall \beta\in [0,\epsilon], \mathbf{e} \in N(A),
\end{align}
so that
\begin{align}
    \left(\nabla_{\mathbf{x}}(\mathbf{z}_j(\mathbf{x}^*)-\mathbf{z}_{k}(\mathbf{x}^*))\right)^T
    (\mathbf{x}-\mathbf{x}^*)=0.
\end{align}

Our experiments were performed on 1000 randomly selected test points. For each test point $\mathbf{x}^*$, we randomly sampled 100 $L_2$-normalized direction $\mathbf{e} \in N(A)$. As $\beta$ varied from $0$ to $\epsilon$, $\mathbf{x}$ may belong to different regions. We searched for the linear regions along these directions ($\epsilon=0.2$) and counted the number of unique linear regions. As shown in Figure~\ref{fig:number}, models using BN or trained with a smaller learning rate can usually partition the input space into more linear regions.

\begin{figure}[htbp]\centering
    \subfigure[]{\label{fig:number}         \includegraphics[width=.3\columnwidth,clip]{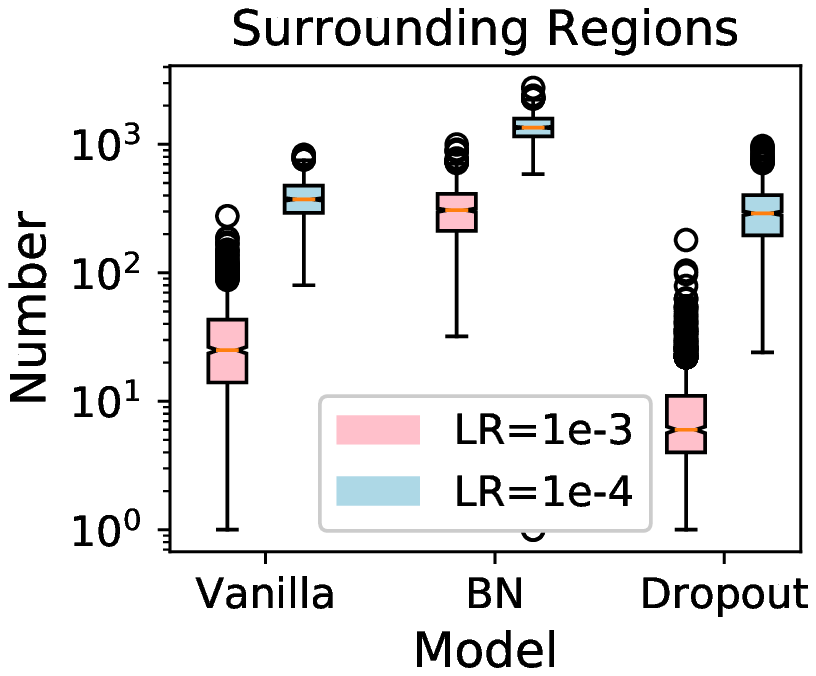}}
    \subfigure[]{\label{fig:LinearityLLR}   \includegraphics[width=.3\columnwidth,clip]{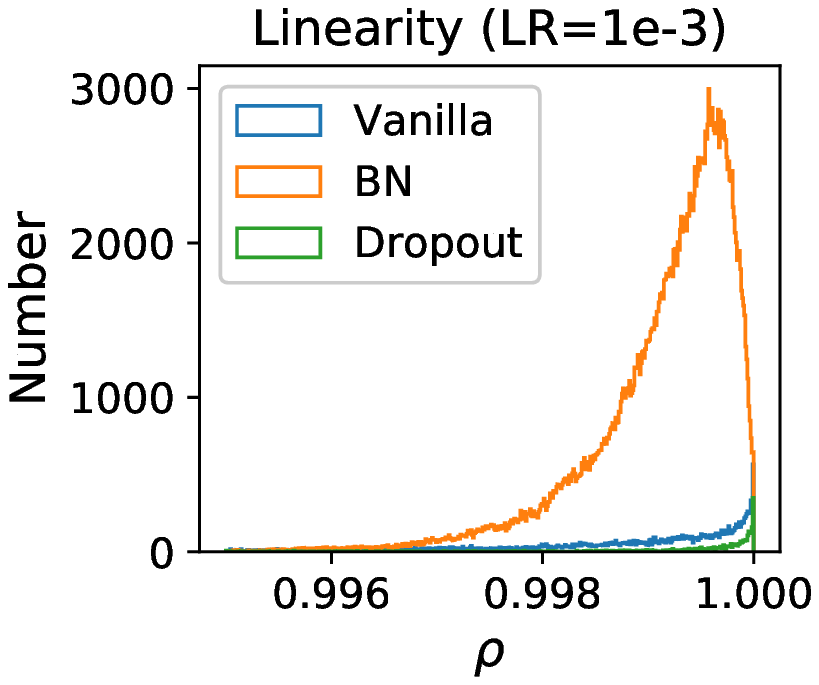}}
    \subfigure[]{\label{fig:LinearitySLR}   \includegraphics[width=.3\columnwidth,clip]{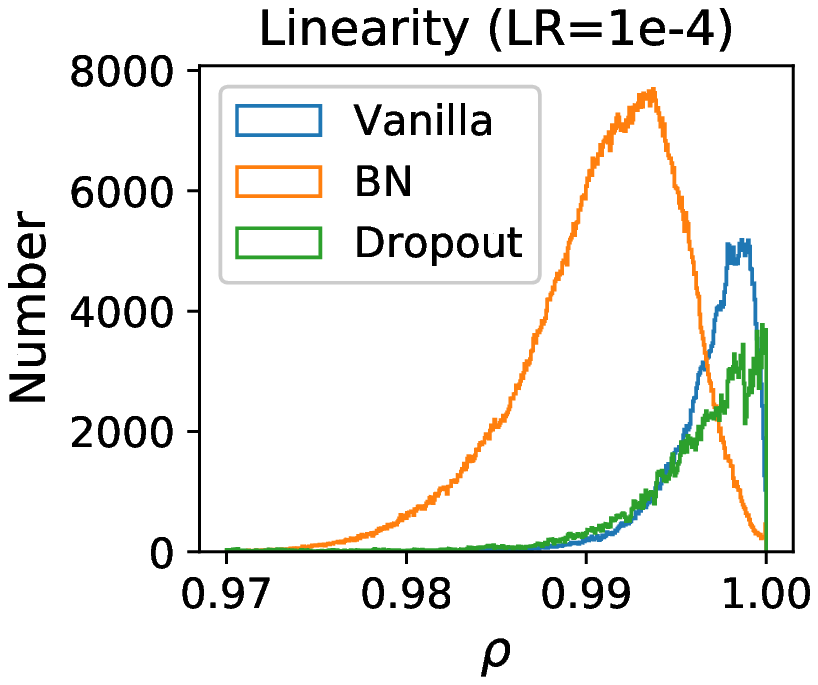}}
\caption{(a): boxplot of the number of unique surrounding regions for 1000 test points. Different colors represent different learning rates (LRs). (b) and (c): distributions of the cosine similarity $\rho$ of all unique surrounding regions. Models in (b) were trained with a larger learning rate (1e-3), and (c) a smaller learning rate (1e-4).}
\end{figure}

Next, we checked the relevance of these unique surrounding regions. First, we chose an $\mathbf{x}=\mathbf{x}^*+\beta \mathbf{e}$ from each unique linear region and calculated $\mathbf{g}(\mathbf{x})$ as its decision direction:
\begin{align}
    \mathbf{g}(\mathbf{x}) = \nabla_{\mathbf{x}}\left(\mathbf{z}_{k_1}(\mathbf{x}) - \mathbf{z}_{k_2}(\mathbf{x})\right),
\end{align}
where $k_1$ and $k_2$ are the indices of the two largest elements of logits. We calculated the cosine similarity $\rho=\frac{\mathbf{g}(\mathbf{x})^T\mathbf{g}(\mathbf{x}^*)}
{\|\mathbf{g}(\mathbf{x})\|\|\mathbf{g}(\mathbf{x}^*)\|}$ to measure the relevance of different linear regions. As illustrated in Figures~\ref{fig:LinearityLLR} and \ref{fig:LinearitySLR}, a linear region has high relevance to its surrounding regions, which demonstrates the linearity of DNNs. It also shows that BN slightly reduces the relevance.

However, it was found that this relevance decreases rapidly with the depth of the DNN, resulting in gradients that resemble the white noise, which is known as the shattered gradients problem~\citep{Balduzzi2017}. The low relevance also makes it hard to perform adversarial attacks because of the limited information of the local gradients~\citep{Athalye2018a}.

\section{Discussions and Conclusions} \label{sect:Conclusion}

This paper proposes novel tools to analyze the linear regions of DNNs. We explored the linear regions of models trained with different optimization settings, and observed significant differences of their inspheres, directions of the hyperplanes, decision boundaries, and relevance of the surrounding regions. Our empirical observations illustrated that different optimization techniques may introduce different properties of the linear regions, even though they have similar classification accuracies. This phenomenon indicates that a more meticulous perspective is needed to study the behaviors of DNNs.

For DNNs with smooth nonlinear activations, it is still an open problem to define their linear regions with soft transition boundaries. One way to address the problem is to approximate the activation function by a piecewise linear function, in which case our approaches can directly be utilized to analyze the linear regions. By choosing a local linearity measure, like the one proposed by \citet{Qin2019}, a soft linear region may also be precisely described by a set of inequalities, which require that the local linearity of each neuron is larger than a certain threshold. However, our approaches cannot be applied to these linear regions since their convexity is not guaranteed.

Our future research will reduce the high computational cost of our approach when applied to deeper and larger DNNs, and formulate a theoretical framework to explain why different optimization techniques can result in different properties of the linear regions. Eventually, we will try to answer the following questions:
\begin{itemize}
    \item \emph{What is the relationship between the properties of linear regions and the behaviors of a DNN?}
    \item \emph{What kinds of linear regions can best approximate the data manifold?}
    \item \emph{How to design novel optimization techniques to achieve such linear regions?}
\end{itemize}

\section{Acknowledgments}
This research was supported by the National Natural Science Foundation of China under Grant 61873321 and Hubei Technology Innovation Platform under Grant 2019AEA171. The authors would like to thank Guang-He Lee from MIT for pointing out the connection of our paper to theirs.

\bibliography{xiaozhangbib}

\begin{thebibliography}{38}
\providecommand{\natexlab}[1]{#1}
\providecommand{\url}[1]{\texttt{#1}}
\expandafter\ifx\csname urlstyle\endcsname\relax
  \providecommand{\doi}[1]{doi: #1}\else
  \providecommand{\doi}{doi: \begingroup \urlstyle{rm}\Url}\fi

\bibitem[Arora et~al.(2018)Arora, Basu, Mianjy, and Mukherjee]{Arora2018}
Raman Arora, Amitabh Basu, Poorya Mianjy, and Anirbit Mukherjee.
\newblock Understanding deep neural networks with rectified linear units.
\newblock In \emph{Proc. Int'l Conf. on Learning Representations}, Vancouver,
  Canada, May 2018.

\bibitem[Athalye et~al.(2018)Athalye, Carlini, and Wagner]{Athalye2018a}
Anish Athalye, Nicholas Carlini, and David Wagner.
\newblock Obfuscated gradients give a false sense of security: {C}ircumventing
  defenses to adversarial examples.
\newblock In \emph{Proc. 35th Int'l Conf. on Machine Learning}, pp.\  274--283,
  Stockholm, Sweden, July 2018.

\bibitem[Balduzzi et~al.(2017)Balduzzi, Frean, Leary, Lewis, Ma, and
  McWilliams]{Balduzzi2017}
David Balduzzi, Marcus Frean, Lennox Leary, JP~Lewis, Kurt Wan-Duo Ma, and
  Brian McWilliams.
\newblock The shattered gradients problem: {I}f resnets are the answer, then
  what is the question?
\newblock In \emph{Proc. 34th Int'l Conf. on Machine Learning}, volume~70, pp.\
   342--350, Sydney, Australia, August 2017.

\bibitem[Bengio et~al.(1994)Bengio, Simard, Frasconi, et~al.]{Bengio1994}
Yoshua Bengio, Patrice Simard, Paolo Frasconi, et~al.
\newblock Learning long-term dependencies with gradient descent is difficult.
\newblock \emph{IEEE Trans. on Neural Networks}, 5\penalty0 (2):\penalty0
  157--166, March 1994.

\bibitem[Biggio et~al.(2013)Biggio, Corona, Maiorca, Nelson, {\v{S}}rndi{\'c},
  Laskov, Giacinto, and Roli]{Biggio2013}
Battista Biggio, Igino Corona, Davide Maiorca, Blaine Nelson, Nedim
  {\v{S}}rndi{\'c}, Pavel Laskov, Giorgio Giacinto, and Fabio Roli.
\newblock Evasion attacks against machine learning at test time.
\newblock In \emph{Proc. Joint European Conf. on Machine Learning and Knowledge
  Discovery in Databases}, pp.\  387--402, Berlin, Germany, September 2013.
  Springer.

\bibitem[Bjorck et~al.(2018)Bjorck, Gomes, Selman, and Weinberger]{Bjorck2018}
Nils Bjorck, Carla~P Gomes, Bart Selman, and Kilian~Q Weinberger.
\newblock Understanding batch normalization.
\newblock In \emph{Proc. Advances in Neural Information Processing Systems},
  pp.\  7694--7705, Montreal, Canada, December 2018.

\bibitem[Carlini \& Wagner(2017)Carlini and Wagner]{Carlini2017}
Nicholas Carlini and David Wagner.
\newblock Towards evaluating the robustness of neural networks.
\newblock In \emph{Proc. {IEEE} Symposium on Security and Privacy}, pp.\
  39--57, San Jose, CA, May 2017.

\bibitem[Devlin et~al.(2019)Devlin, Chang, Lee, and Toutanova]{Devlin2019}
Jacob Devlin, Ming-Wei Chang, Kenton Lee, and Kristina Toutanova.
\newblock {BERT}: {P}re-training of deep bidirectional transformers for
  language understanding.
\newblock In \emph{Proc. of the Conf. of the North American Chapter of the
  Association for Computational Linguistics: Human Language Technologies},
  volume~1, pp.\  4171--4186, Minneapolis, MN, June 2019.

\bibitem[Duchi et~al.(2011)Duchi, Hazan, and Singer]{Duchi2011}
John Duchi, Elad Hazan, and Yoram Singer.
\newblock Adaptive subgradient methods for online learning and stochastic
  optimization.
\newblock \emph{Journal of Machine Learning Research}, 12\penalty0
  (Jul):\penalty0 2121--2159, July 2011.

\bibitem[Dyer \& Frieze(1988)Dyer and Frieze]{Dyer1988}
Martin~E. Dyer and Alan~M. Frieze.
\newblock On the complexity of computing the volume of a polyhedron.
\newblock \emph{SIAM Journal on Computing}, 17\penalty0 (5):\penalty0 967--974,
  October 1988.

\bibitem[Freund \& Orlin(1985)Freund and Orlin]{Freund1985}
Robert~M Freund and James~B Orlin.
\newblock On the complexity of four polyhedral set containment problems.
\newblock \emph{Mathematical programming}, 33\penalty0 (2):\penalty0 139--145,
  November 1985.

\bibitem[Glorot \& Bengio(2010)Glorot and Bengio]{Glorot2010}
Xavier Glorot and Yoshua Bengio.
\newblock Understanding the difficulty of training deep feedforward neural
  networks.
\newblock In \emph{Proc. of the 13th Int'l Conf. on Artificial Intelligence and
  Statistics}, pp.\  249--256, Sardinia, Italy, May 2010.

\bibitem[Goodfellow et~al.(2014)Goodfellow, Pouget-Abadie, Mirza, Xu,
  Warde-Farley, Ozair, Courville, and Bengio]{Goodfellow2014}
Ian Goodfellow, Jean Pouget-Abadie, Mehdi Mirza, Bing Xu, David Warde-Farley,
  Sherjil Ozair, Aaron Courville, and Yoshua Bengio.
\newblock Generative adversarial nets.
\newblock In \emph{Proc. Advances in Neural Information Processing Systems},
  pp.\  2672--2680, Montreal, Canada, December 2014.

\bibitem[Goodfellow et~al.(2015)Goodfellow, Shlens, and
  Szegedy]{Goodfellow2015}
Ian~J. Goodfellow, Jonathon Shlens, and Christian Szegedy.
\newblock Explaining and harnessing adversarial examples.
\newblock In \emph{Proc. Int'l Conf. on Learning Representations}, San Diego,
  CA, May 2015.

\bibitem[Graves et~al.(2013)Graves, Mohamed, and Hinton]{Graves2013}
Alex Graves, Abdel-rahman Mohamed, and Geoffrey Hinton.
\newblock Speech recognition with deep recurrent neural networks.
\newblock In \emph{Proc. IEEE Int'l Conf. on Acoustics, Speech and Signal
  Processing}, pp.\  6645--6649, Vancouver, Canada, May 2013.

\bibitem[Hanin \& Rolnick(2019)Hanin and Rolnick]{Hanin2019}
Boris Hanin and David Rolnick.
\newblock Complexity of linear regions in deep networks.
\newblock In \emph{Proc. 36th Int'l Conf. on Machine Learning}, pp.\
  2596--2604, Long Beach, CA, May 2019.

\bibitem[He et~al.(2015)He, Zhang, Ren, and Sun]{He2015}
Kaiming He, Xiangyu Zhang, Shaoqing Ren, and Jian Sun.
\newblock Delving deep into rectifiers: Surpassing human-level performance on
  imagenet classification.
\newblock In \emph{Proc. of the IEEE Int'l Conf. on Computer Vision}, pp.\
  1026--1034, Santiago, Chile, December 2015.

\bibitem[He et~al.(2016)He, Zhang, Ren, and Sun]{He2016}
Kaiming He, Xiangyu Zhang, Shaoqing Ren, and Jian Sun.
\newblock Deep residual learning for image recognition.
\newblock In \emph{Proc. IEEE Conf. on Computer Vision and Pattern
  Recognition}, pp.\  770--778, Las Vegas, NV, June 2016.

\bibitem[Hochreiter(1998)]{Hochreiter1998}
Sepp Hochreiter.
\newblock The vanishing gradient problem during learning recurrent neural nets
  and problem solutions.
\newblock \emph{Int'l Journal of Uncertainty, Fuzziness and Knowledge-Based
  Systems}, 6\penalty0 (02):\penalty0 107--116, April 1998.

\bibitem[Ioffe \& Szegedy(2015)Ioffe and Szegedy]{Ioffe2015}
Sergey Ioffe and Christian Szegedy.
\newblock Batch normalization: {A}ccelerating deep network training by reducing
  internal covariate shift.
\newblock In \emph{Proc. 32nd Int'l Conf. on Machine Learning}, pp.\  448--456,
  Lile, France, July 2015.

\bibitem[Kingma \& Ba(2014)Kingma and Ba]{Kingma2014}
Diederik~P. Kingma and Jimmy Ba.
\newblock Adam: {A} method for stochastic optimization.
\newblock In \emph{Proc. Int'l Conf. on Learning Representations}, Banff,
  Canada, April 2014.

\bibitem[Krizhevsky et~al.(2012)Krizhevsky, Sutskever, and
  Hinton]{Krizhevsky2012}
Alex Krizhevsky, Ilya Sutskever, and Geoffrey~E Hinton.
\newblock Imagenet classification with deep convolutional neural networks.
\newblock In \emph{Proc. Advances in Neural Information Processing Systems},
  pp.\  1097--1105, Lake Tahoe, NE, December 2012.

\bibitem[Linial(1986)]{Linial1986}
Nathan Linial.
\newblock Hard enumeration problems in geometry and combinatorics.
\newblock \emph{SIAM Journal on Algebraic Discrete Methods}, 7\penalty0
  (2):\penalty0 331--335, April 1986.

\bibitem[Madry et~al.(2018)Madry, Makelov, Schmidt, Tsipras, and
  Vladu]{Madry2018}
Aleksander Madry, Aleksandar Makelov, Ludwig Schmidt, Dimitris Tsipras, and
  Adrian Vladu.
\newblock Towards deep learning models resistant to adversarial attacks.
\newblock In \emph{Proc. Int'l Conf. on Learning Representations}, Vancouver,
  Canada, May 2018.

\bibitem[Montufar et~al.(2014)Montufar, Pascanu, Cho, and Bengio]{Montufar2014}
Guido~F Montufar, Razvan Pascanu, Kyunghyun Cho, and Yoshua Bengio.
\newblock On the number of linear regions of deep neural networks.
\newblock In \emph{Proc. Advances in Neural Information Processing Systems},
  pp.\  2924--2932, Montreal, Canada, December 2014.

\bibitem[Moosavi-Dezfooli et~al.(2016)Moosavi-Dezfooli, Fawzi, and
  Frossard]{Moosavi-Dezfooli2016}
Seyed-Mohsen Moosavi-Dezfooli, Alhussein Fawzi, and Pascal Frossard.
\newblock Deepfool: {A} simple and accurate method to fool deep neural
  networks.
\newblock In \emph{Proc. IEEE Conf. on Computer Vision and Pattern
  Recognition}, pp.\  2574--2582, Las Vegas, NV, June 2016.

\bibitem[Novak et~al.(2018)Novak, Bahri, Abolafia, Pennington, and
  Sohl-Dickstein]{Novak2018}
Roman Novak, Yasaman Bahri, Daniel~A. Abolafia, Jeffrey Pennington, and Jascha
  Sohl-Dickstein.
\newblock Sensitivity and generalization in neural networks: {A}n empirical
  study.
\newblock In \emph{Proc. Int'l Conf. on Learning Representations}, Vancouver,
  Canada, May 2018.

\bibitem[Pascanu et~al.(2014)Pascanu, Montufar, and Bengio]{Pascanu2014}
Razvan Pascanu, Guido Montufar, and Yoshua Bengio.
\newblock On the number of response regions of deep feed forward networks with
  piece-wise linear activations.
\newblock \emph{CoRR}, abs/1312.6098, 2014.
\newblock URL \url{https://arxiv.org/abs/1312.6098}.

\bibitem[Poole et~al.(2016)Poole, Lahiri, Raghu, Sohl-Dickstein, and
  Ganguli]{Poole2016}
Ben Poole, Subhaneil Lahiri, Maithra Raghu, Jascha Sohl-Dickstein, and Surya
  Ganguli.
\newblock Exponential expressivity in deep neural networks through transient
  chaos.
\newblock In \emph{Proc. Advances in Neural Information Processing Systems},
  pp.\  3360--3368, Barcelona, Spain, December 2016.

\bibitem[Qin et~al.(2019)Qin, Martens, Gowal, Krishnan, Dvijotham, Fawzi, De,
  Stanforth, and Kohli]{Qin2019}
Chongli Qin, James Martens, Sven Gowal, Dilip Krishnan, Krishnamurthy
  Dvijotham, Alhussein Fawzi, Soham De, Robert Stanforth, and Pushmeet Kohli.
\newblock Adversarial robustness through local linearization.
\newblock In \emph{Proc. Advances in Neural Information Processing Systems},
  pp.\  13824--13833, Vancouver, Canada, December 2019.

\bibitem[Santurkar et~al.(2018)Santurkar, Tsipras, Ilyas, and
  Madry]{Santurkar2018}
Shibani Santurkar, Dimitris Tsipras, Andrew Ilyas, and Aleksander Madry.
\newblock How does batch normalization help optimization?
\newblock In \emph{Proc. Advances in Neural Information Processing Systems},
  pp.\  2483--2493, Montreal, Canada, December 2018.

\bibitem[Saxe et~al.(2014)Saxe, McClelland, and Ganguli]{Saxe2014}
Andrew~M. Saxe, James~L. McClelland, and Surya Ganguli.
\newblock Exact solutions to the nonlinear dynamics of learning in deep linear
  neural networks.
\newblock In \emph{Proc. Int'l Conf. on Learning Representations}, Banff,
  Canada, April 2014.

\bibitem[Serra et~al.(2018)Serra, Tjandraatmadja, and Ramalingam]{Serra2018}
Thiago Serra, Christian Tjandraatmadja, and Srikumar Ramalingam.
\newblock Bounding and counting linear regions of deep neural networks.
\newblock In \emph{Proc. 35th Int'l Conf. on Machine Learning}, pp.\
  4565--4573, Stockholm, Sweden, July 2018.

\bibitem[Silver et~al.(2017)Silver, Schrittwieser, Simonyan, Antonoglou, Huang,
  Guez, Hubert, Baker, Lai, and Bolton]{Silver2017}
David Silver, Julian Schrittwieser, Karen Simonyan, Ioannis Antonoglou, Aja
  Huang, Arthur Guez, Thomas Hubert, Lucas Baker, Matthew Lai, and Adrian
  Bolton.
\newblock Mastering the game of {G}o without human knowledge.
\newblock \emph{Nature}, 550\penalty0 (7676):\penalty0 354, October 2017.

\bibitem[Srivastava et~al.(2014)Srivastava, Hinton, Krizhevsky, Sutskever, and
  Salakhutdinov]{Srivastava2014}
Nitish Srivastava, Geoffrey Hinton, Alex Krizhevsky, Ilya Sutskever, and Ruslan
  Salakhutdinov.
\newblock Dropout: {A} simple way to prevent neural networks from overfitting.
\newblock \emph{Journal of Machine Learning Research}, 15\penalty0
  (1):\penalty0 1929--1958, January 2014.

\bibitem[Szegedy et~al.(2014)Szegedy, Zaremba, Sutskever, Bruna, Erhan,
  Goodfellow, and Fergus]{Szegedy2014}
Christian Szegedy, Wojciech Zaremba, Ilya Sutskever, Joan Bruna, Dumitru Erhan,
  Ian~J. Goodfellow, and Rob Fergus.
\newblock Intriguing properties of neural networks.
\newblock In \emph{Proc. Int'l Conf. on Learning Representations}, Banff,
  Canada, April 2014.

\bibitem[Toth et~al.(2017)Toth, O'Rourke, and Goodman]{Toth2017}
Csaba~D Toth, Joseph O'Rourke, and Jacob~E Goodman.
\newblock \emph{Handbook of discrete and computational geometry}.
\newblock Chapman and Hall/CRC, third edition, 2017.

\bibitem[Wager et~al.(2013)Wager, Wang, and Liang]{Wager2013}
Stefan Wager, Sida Wang, and Percy~S Liang.
\newblock Dropout training as adaptive regularization.
\newblock In \emph{Proc. Advances in Neural Information Processing Systems},
  pp.\  351--359, Lake Tahoe, NE, December 2013.

\end{thebibliography}
\bibliographystyle{iclr2020_conference}

\clearpage
\appendix

\section{Illustration on a Toy Dataset} \label{sect:Toy2D}

Figure~\ref{fig:slice2D} shows a two-dimensional slice of a high-dimensional input space, which may not be convincing enough to describe the properties of linear regions through the entire input space. To address the problem, we trained three models (vanilla, BN, dropout) on a two-dimensional toy dataset to see if the properties still remain.

Figure~\ref{fig:Toy2Ddata} visualizes the toy dataset we constructed, which consists of two separate spirals, representing two different classes. The vanilla model used here was a fully-connected DNN which consisted of three hidden layers with 10 rectified linear nodes in each. The BN model was the vanilla model with BN added after the pre-activations of each hidden layer, and the dropout model added dropout after the activations (the drop rate was 0.2). The models were trained with Adam optimizer (LR=1e-3, $\beta_1$=0.9, $\beta_2$=0.999).

\begin{figure}[htbp] \centering
\includegraphics[width=0.4\columnwidth]{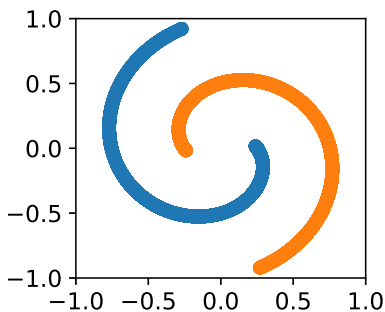}
\caption{Visualization of the toy dataset. Different colors represent different classes.}
\label{fig:Toy2Ddata}
\end{figure}

The test accuracies for the three models were all 100\%. Figure~\ref{fig:Toy2DLinearRegion} plots linear regions and classification regions in the input space, illustrating the same properties as Figure~\ref{fig:slice2D} shows.

\begin{figure}[htbp] \centering
\includegraphics[width=0.8\columnwidth]{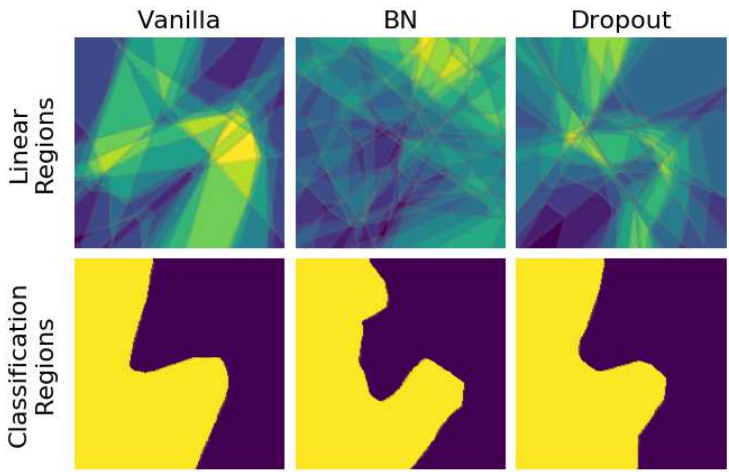}
\caption{Linear regions and classification regions of models trained with different optimization techniques. The gray curves in the top row are transition boundaries separating different linear regions, and the color represents the activation rate of the corresponding linear region. In the bottom row, different colors represent different classification regions, separated by the decision boundary.}
\label{fig:Toy2DLinearRegion}
\end{figure}

\section{Calculate Parameters of the Inequalities} \label{sect:Parameters}

As mentioned in Section~\ref{sect:approach}, the first $l-1$ hidden layers serve as an affine transformation of $\mathbf{x}\in S_{l-1}$. Besides, the pre-activation outputs of the $l$-th hidden layer $\mathbf{h}^l(\mathbf{x})$ are also an affine transformation of the activation outputs of the $(l-1)$-th layer, hence  $\mathbf{h}^l(\mathbf{x})$ is a linear function of $\mathbf{x}\in S_{l-1}$, which means $\mathbf{h}^l_n(\mathbf{x})=\mathbf{w}_n^T\mathbf{x}+b_n$, where $n$ denotes a node of the $l$-th layer. So, we have:
\begin{align}
\mathbf{w}_n&=\nabla_{\mathbf{x}}\mathbf{h}^l_n(\mathbf{x}),\\
b_n&=\mathbf{h}^l_n(\mathbf{x})-\mathbf{w}_n^T\mathbf{x}=\mathbf{h}^l_n(\mathbf{x})-(\nabla_{\mathbf{x}}\mathbf{h}^l_n(\mathbf{x}))^T\mathbf{x}.
\end{align}

Recall that according to the definition of $S_{l-1}$, $\mathbf{x}^*$ shares the same linear function as other $\mathbf{x}\in S_{l-1}$, therefore we have:
\begin{align}
\mathbf{w}_n&=\nabla_{\mathbf{x}}\mathbf{h}^l_n(\mathbf{x}^*),\\
b_n&=\mathbf{h}^l_n(\mathbf{x}^*)-(\nabla_{\mathbf{x}}\mathbf{h}^l_n(\mathbf{x}^*))^T\mathbf{x}^*.
\end{align}
Then we should select $\mathbf{x}$ from $S_{l-1}$ to construct $S_{l}$, which means $\mathbf{x}$ satisfies $\mbox{sgn}(\mathbf{h}^l_n(\mathbf{x}^*))\mathbf{h}^l_n(\mathbf{x})\geq 0$, i.e., $\mathbf{x}$ has the same activation state of Node $n$ as $\mathbf{x}^*$. For convenience, $\mathbf{w}_n$ and $\mathbf{b}_n$ are directly multiplied by $\mbox{sgn}(\mathbf{h}^l_n(\mathbf{x}^*))$ to obtain the form in (\ref{eq:LinearRegion}).

\section{Remove Redundant Inequalities} \label{sect:RemoveRedundancy}

Every constraint in $S_L$ represents a halfspace, but some of them are redundant, i.e., they are completely overrode by others. To eliminate the redundant constraints, we need to successively solve the following linear programming problem for each untested constraint and sequentially remove the redundant constraint from $\mathcal{C^*}$:
\begin{align}
    y = \min_{\mathbf{x}} \quad &\mathbf{w}_k^T\mathbf{x}+b_k  \\
        \textrm{s. t.} \quad     &\mathbf{w}_i^T\mathbf{x}+b_i\geq0, \quad \forall_{i \neq k} (\mathbf{w}_i, b_i) \in \mathcal{C^*}, \nonumber \\
                                &\mathbf{w}_k^T\mathbf{x}+b_k+1\geq0, \nonumber \\
                                &\mbox{MIN}_\mathbf{x} \leq \mathbf{x} \leq \mbox{MAX}_\mathbf{x}, \nonumber
\end{align}
The constraint $\mathbf{w}_k^T\mathbf{x}+b_k\geq0$ is redundant if and only if $y\geq0$.

Note that each and every constraint needs to be tested sequentially, resulting in very heavy computational cost when dealing with thousands of constraints.

\section{$\mathbf{x}^t$ Lies on the Faces of the Linear Region in Most Cases} \label{sect:OnFaces}

\begin{prop}
$\mathbf{x}^t$ defined in Section~\ref{subsect:xt} lies on the faces of its linear region in most cases.
\end{prop}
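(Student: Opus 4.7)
The plan is to combine the first-order optimality conditions for the concave program in (\ref{eq:advObj}) with a mild non-degeneracy assumption on the logit Jacobians restricted to $S_L$, and deduce that $\nabla f$ cannot vanish in the (relative) interior of $S_L$; by concavity of the objective, the maximizer $\mathbf{x}^t$ must then lie on a proper face.

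First, I would exploit the fact that on $S_L$ each logit is affine, $\mathbf{z}_j(\mathbf{x})=\mathbf{a}_j^T\mathbf{x}+c_j$ with $\mathbf{a}_j=\nabla_{\mathbf{x}}\mathbf{z}_j(\mathbf{x}^*)$ constant throughout the region. A direct computation then gives
\begin{align*}
\nabla_{\mathbf{x}} f(\mathbf{x}) \;=\; \mathbf{a}_t - \sum_{j=1}^M p_j(\mathbf{x})\,\mathbf{a}_j \;=\; \sum_{j\neq t} p_j(\mathbf{x})\,(\mathbf{a}_t-\mathbf{a}_j),
\end{align*}
where $p_j(\mathbf{x})=\exp(\mathbf{z}_j(\mathbf{x}))/\sum_k\exp(\mathbf{z}_k(\mathbf{x}))>0$ is the softmax probability of class $j$. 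Since log-softmax composed with an affine map is concave, the objective is concave on $S_L$.

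Next, suppose for contradiction that $\mathbf{x}^t$ lies in the relative interior of $S_L$, so that no linear inequality in $\mathcal{C^*}$ and no box bound is active. The KKT conditions then reduce to the unconstrained stationarity $\nabla_{\mathbf{x}}f(\mathbf{x}^t)=0$, i.e.
\begin{align*}
\sum_{j\neq t} p_j(\mathbf{x}^t)\,(\mathbf{a}_t-\mathbf{a}_j) \;=\; 0,
\end{align*}
with strictly positive coefficients $p_j(\mathbf{x}^t)>0$. This forces the $M-1$ Jacobian-difference vectors $\{\mathbf{a}_t-\mathbf{a}_j\}_{j\neq t}\subset\mathbb{R}^d$ to be linearly dependent. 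The last step is a genericity argument: when $d\geq M-1$ (on MNIST, $d=784$ against $M=10$), linear dependence of these $M-1$ vectors is an exact algebraic condition of codimension at least $d-M+2$ on the effective affine map $\mathbf{z}|_{S_L}$, and hence holds only on a measure-zero subset of weight configurations. Whenever the difference vectors are independent, the interior stationarity equation admits no solution with all $p_j>0$, and by concavity of $f$ on the compact polytope $S_L$ the maximizer $\mathbf{x}^t$ must be attained on a proper face of $S_L$, i.e., at least one constraint from $\mathcal{C^*}$ (or a box bound) is active.

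The main obstacle is making the phrase \emph{in most cases} rigorous. Since a trained DNN's weights are not drawn from a clean, controllable distribution, a measure-theoretic statement has to be cast relative to the joint law induced by initialization plus training, which is essentially uncontrollable; a purely deterministic version would need to rule out the degenerate Jacobian configurations that could in principle be realised on some linear regions. I expect the cleanest formulation is either (i) a deterministic conditional statement ``if $\{\mathbf{a}_t-\mathbf{a}_j\}_{j\neq t}$ is linearly independent, then $\mathbf{x}^t$ lies on a face of $S_L$'', with ``in most cases'' supported empirically by reporting, for many sampled $(\mathbf{x}^*,t)$, the fraction of $\mathbf{x}^t$'s with at least one active constraint; or (ii) a probabilistic statement over a concrete random-weight model, tied to the experiments of Section~\ref{subsect:xt}.
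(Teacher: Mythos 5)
Your proposal is correct and follows essentially the same route as the paper's own proof: compute the gradient of the log-softmax objective on the region where the logits are affine, observe that an interior maximizer would force a positively-weighted linear relation among the logit gradients (equivalently, linear dependence of the difference vectors), and rule this out generically because $d\gg M$. Your phrasing in terms of the differences $\{\mathbf{a}_t-\mathbf{a}_j\}_{j\neq t}$ is in fact the slightly sharper form of the non-degeneracy condition the paper states via linear independence of $\{\mathbf{w}_i\}_{i=1}^M$, and your closing remarks on formalizing ``in most cases'' go beyond what the paper attempts, but the argument is the same.
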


\begin{proof}
Let $M$ denote the number of classes, $\mathbf{x}^*\in \mathbb{R}^d$ a given point lying in the bounded linear region $S$, $\partial S$ the faces of $S$. Recall that the function fitted in a linear region is completely linear, hence the $j$-th logit $\mathbf{z}_j(x)$ can be rewritten as:
\begin{align}
\mathbf{z}_j(\mathbf{x})=\mathbf{w}_j^T\mathbf{x}+b_j, \quad \forall j\in\{1,2,...,M\}.
\end{align}

As defined in (\ref{eq:advObj}), let $p_t(\mathbf{x})=\frac{\exp(\mathbf{w}_t^T\mathbf{x}+b_t)}{\sum_j\exp(\mathbf{w}_j^T\mathbf{x}+b_j)}$, then $\mathbf{x}^t=\arg\max_{\mathbf{x}\in S}\log p_t(\mathbf{x})$. If $\mathbf{x}^t \in S\setminus\partial S$, $\mathbf{x}^t$ must be a maximum, which means:
\begin{align}
\nabla_{\mathbf{x}}\log p_t(\mathbf{x}^t)=\sum_{i=1}^Mp_i(\mathbf{x}^t)(\mathbf{w}_t-\mathbf{w}_i)=0.
\end{align}
Therefore,
\begin{align}
\mathbf{w}_t=\sum_{i=1}^Mp_i(\mathbf{x}^t)\mathbf{w}_i. \label{eq:gradientJ}
\end{align}

Recall that $\mathbf{w}_i \in \mathbb{R}^d$ and $d\gg M$, hence in most cases $\{\mathbf{w}_i\}_{i=1}^M$ are linear independent, which means $\mathbf{w}_t$ cannot be represented as the linear combination of $\{\mathbf{w}_{i\neq t}\}_{i=1}^M$. Since $0<p_t(\mathbf{x}^t)<1$, there is no $\{p_i(\mathbf{x}^t)\}_{i=1}^M$ satisfying (\ref{eq:gradientJ}). Therefore, $\mathbf{x}^t \in \partial S$.

\end{proof}

\section{Experiments on the CIFAR-10 Dataset} \label{sect:AdditionalExperiments}

We also analyzed simple CNN models trained on the CIFAR-10 dataset. The results are presented in this Appendix.

\subsection{Models}

The base architecture of the CNN models is shown in Table~\ref{tab:cnnmodel}. To simplify the analysis, the convolutional layers were not padded, and large strides were used to replace the max-pooling layers. For the BN model, BN was added before ReLU activations. For the dropout model, dropout layers were added after ReLU activations.

\begin{table}[htbp] \centering
\caption{Architecture of the vanilla CNN model.} \label{tab:cnnmodel}
\begin{tabular}{c|c|c}
\toprule
    Layers      & Parameters                                & Activation\\ \midrule
    Input       & input size=(32, 32)$\times$3              & -         \\
    Conv        & filters=(3, 3)$\times$32; strides=(2, 2)    & ReLU      \\
    Conv        & filters=(3, 3)$\times$64; strides=(2, 2)    & ReLU      \\
    Conv        & filters=(3, 3)$\times$128; strides=(2, 2)   & ReLU      \\
    Flatten     & -                                         & -         \\
    Dense       & nodes=1024                                & ReLU      \\
    Dense       & nodes=10                                  & Softmax   \\
\bottomrule
\end{tabular}
\end{table}

The models were trained with Adam optimizer (learning rate 1e-3, $\beta_1$=0.9, $\beta_2$=0.999) on the CIFAR-10 dataset without data augumentation. The pixel values were rescaled to [-1, 1]. The batch size was 256, and early stopping was used to reduce overfitting. After training, the accuracies on the test set of the vanilla, BN and dropout models were 68.2\%, 70.1\% and 69.2\%, respectively. Since the goal of this paper is to show that different optimization techniques can result in models with similar classification accuracies but significantly different linear regions, we only trained our models with one learning rate (1e-3), because other learning rates (1e-2, 1e-4) led to models with significantly different classification accuracies.

\subsection{Properties}

As we have claimed in Section~\ref{sect:LinearRegion}, a CNN model can be regarded as a highly sparse fully-connected DNN, which means we can analyze the properties of its linear regions in the same way. However, a convolutional layer may introduce some other properties into the linear regions. For example, considering the sparsity of the weights, a node of a convolutional layer can only partition a subspace into linear regions (instead of the whole input space, like a node in fully-connected DNNs does). Therefore, the partition of the linear regions are relatively independent for the nodes in a CNN, as most subspaces do not intersect. A direct consequence of their independency is that the properties of the linear regions in a CNN are more stable than those in a fully-connected DNN, which will be shown in the following experiments.

We randomly selected 1000 test points to perform the analysis. The inspheres of the linear regions in the CNNs are shown in Figure~\ref{fig:CnnInsphere}. Compared with the MNIST dataset, the CIFAR-10 dataset is more complicated and the mean point of a class is no longer classified into the same class, hence we randomly sampled a point from another class to replace the mean point described in Section~\ref{sect:Properties} when constructing the decision regions. The experimental results show that both BN and dropout can narrow the linear regions. For dropout, the effect is more obvious on the decision boundaries than on the manifold regions.

\begin{figure}[htbp]\centering
    \subfigure[]{\label{fig:CnnManifoldInsphere}    \includegraphics[width=.3\columnwidth,clip]{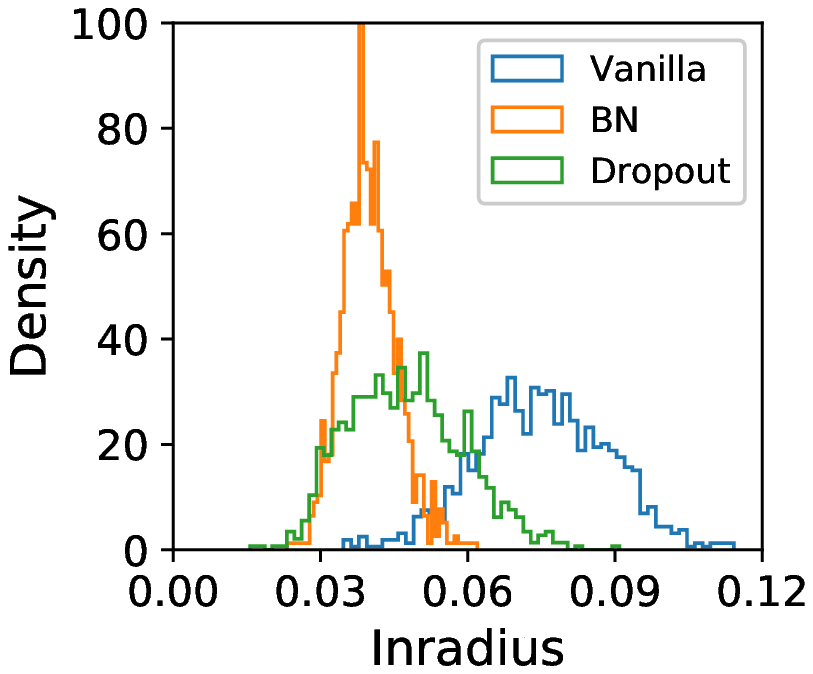}}
    \subfigure[]{\label{fig:CnnDecisionInsphere}    \includegraphics[width=.3\columnwidth,clip]{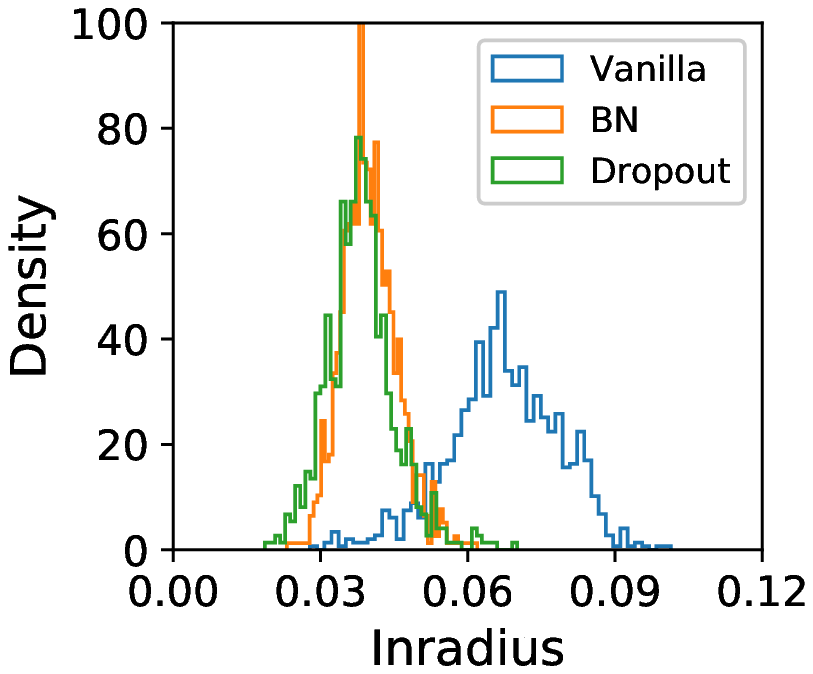}}
    \subfigure[]{\label{fig:CnnAdversarialInsphere} \includegraphics[width=.3\columnwidth,clip]{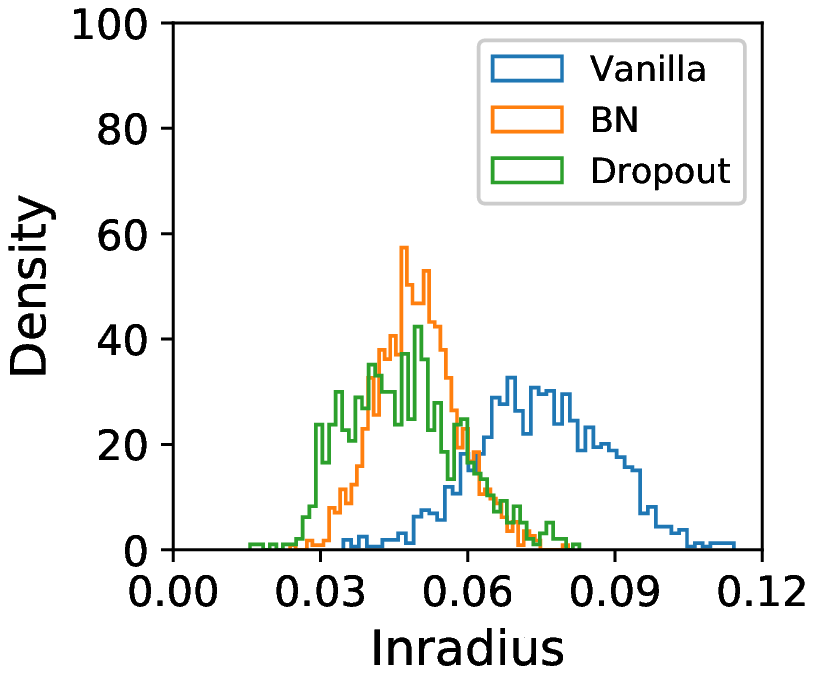}}
\caption{Inradius distributions of different optimization techniques for CNN models.(a) Manifold region; (b) Decision region; (c) Adversarial region.} \label{fig:CnnInsphere}
\end{figure}

Figure~\ref{fig:CnnAngles} shows the angles between different hyperplanes of a linear region. According to the architecture of the models, the first $15\times15\times32+7\times7\times64+3\times3\times128=11,488$ hyperplanes were provided by the convolutional layers, and the last $1,024$ by the fully-connected hidden layer. Since a node in a convolutional layer only partitions a certain subspace into regions, most of the pixel values are 0 in Figure~\ref{fig:CnnAngles} because there is no intersection between two subspaces. However, we can still observe that BN can orthogonalize the directions of the hyperplanes, whereas dropout the opposite.

\begin{figure}[htbp]\centering
    \subfigure[]{\label{fig:CnnAnglesVanilla}   \includegraphics[width=.3\columnwidth,clip]{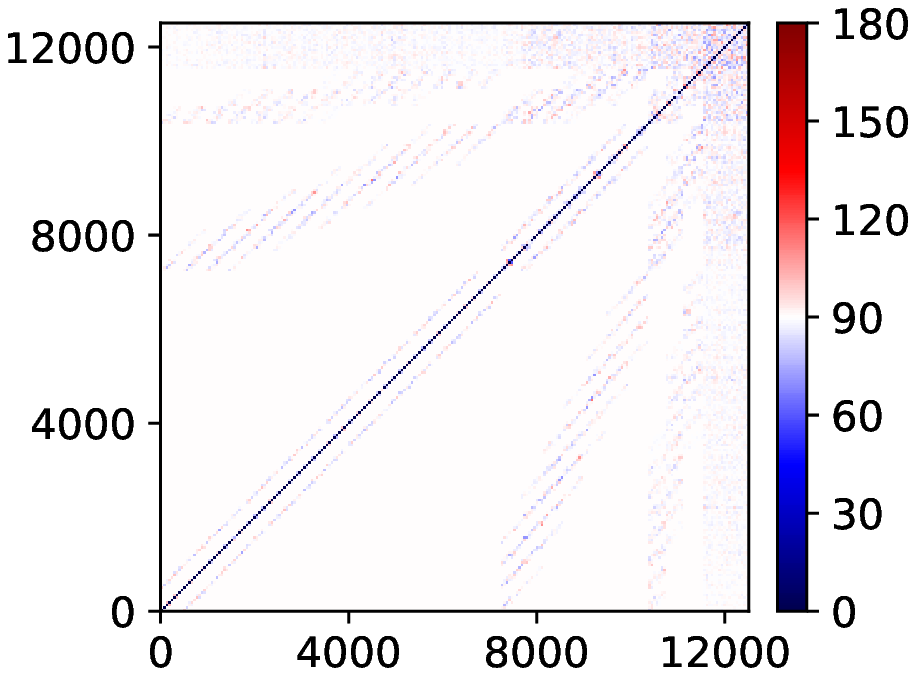}}
    \subfigure[]{\label{fig:CnnAnglesBN}        \includegraphics[width=.3\columnwidth,clip]{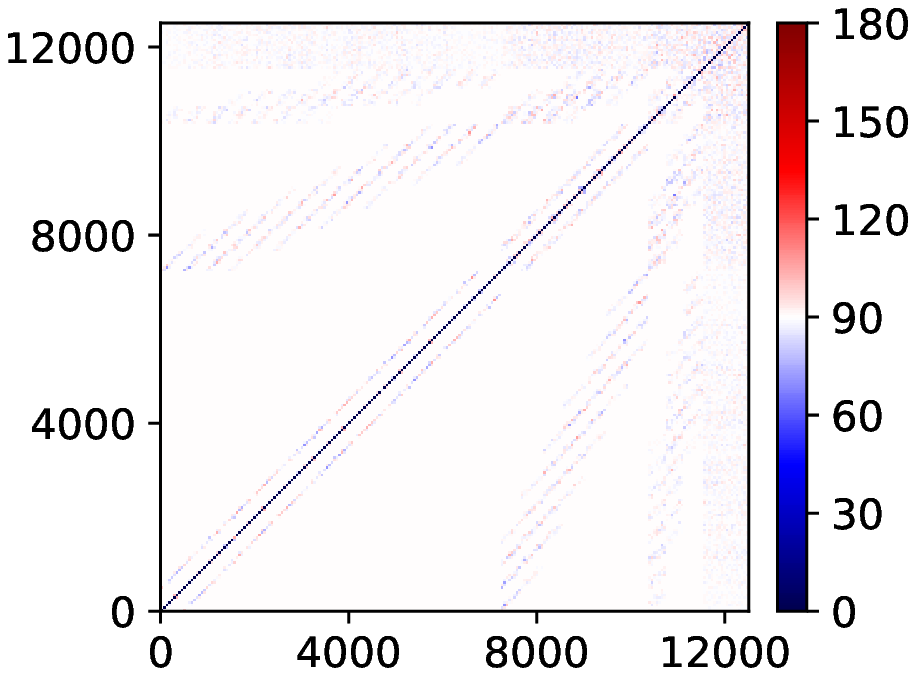}}
    \subfigure[]{\label{fig:CnnAnglesDropout}   \includegraphics[width=.3\columnwidth,clip]{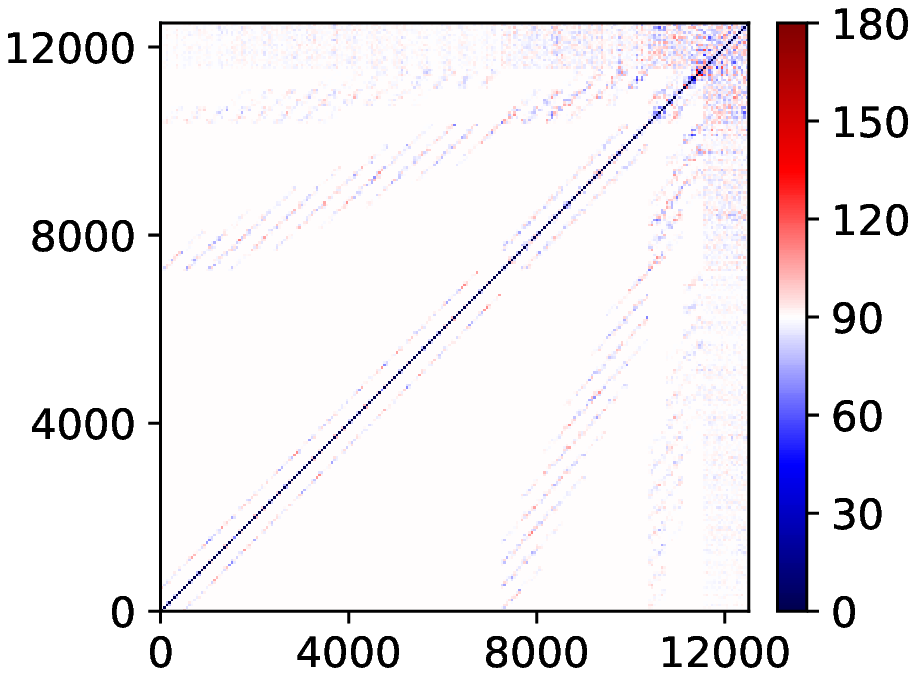}}
\caption{Angles between directions of different hyperplanes of a single manifold region. The indices on both axes indicate different hyperplanes: 0-7,199 represent the hyperplanes of the first convolutional layer, 7,200-10,335 the second convolutional layer, 10,336-11,487 the third convolutional layer, and 11,488-12,511 the fully-connected hidden layer. The value of a pixel is $\arccos\frac{\mathbf{w}_i^T\mathbf{w}_j}{\|\mathbf{w}_i\|\|\mathbf{w}_j\|}$, i.e., the angle between $\mathbf{w}_i$ and $\mathbf{w}_j$. (a) Vanilla; (b) BN; (c) Dropout.} \label{fig:CnnAngles}
\end{figure}

Decision boundaries in the manifold regions of the CNNs seem less likely to be eliminated, compared with the fully-connected DNNs. As shown in Table~\ref{tab:CnnDecisionBoundary}, dropout can eliminate the decision boundaries in the manifold regions of CNNs, but BN did not work anymore. However, according to the distortions of the three models, BN still showed its ability to reduce the size of the manifold regions. Figure~\ref{fig:CnnDecisionBoundary} shows a specific example of $\mathbf{x}^t$, whose original point was classified as $automobile$. $\mathbf{x}^t$ of the BN model had smaller distortion, which was also observed in fully-connected DNNs.

\begin{table}[htbp] \centering
\caption{Average numbers of the classification regions in a manifold region, and the average distortions.} \label{tab:CnnDecisionBoundary}
\begin{tabular}{c|ccc}
\toprule
    Measure     & Vanilla   & BN        & Dropout \\ \midrule
    Number      & 7.811     & 9.984     & 2.901 \\
    Distortion  & 27.08     & 20.85     & 25.25 \\
\bottomrule
\end{tabular}
\end{table}

\begin{figure}[htbp]
\centering
    \includegraphics[width=0.8\linewidth]{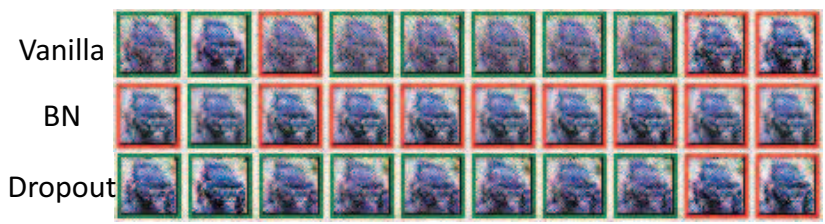}
\caption{Different $\mathbf{x}^t$ of a point which is classified into Class $2$ (automobile). From left to right, $t$ varies from $1$ to $10$ (airplane, automobile, bird, cat, deer, dog, frog, horse, ship, truck). Images with green borders are correctly classified into Class $2$, and red borders misclassified into Class $t$. Different rows represent models trained with different optimization techniques.}
\label{fig:CnnDecisionBoundary}
\end{figure}

The numbers of unique surrounding regions of a manifold region are shown in Figure~\ref{fig:CnnNumber}. Though BN showed its strength in increasing the number of the linear regions, the effect was less obvious compared with those in Figure~\ref{fig:number} for fully-connected DNNs. As we have pointed out before, the sparsity of the weights in the convolutional layers contributes to the stability of the properties of the linear regions. The relevance of the surrounding regions shown in Figure~\ref{fig:CnnLinearity} was similar to fully-connected DNNs. BN still shattered the decision directions, as well as dropout, which was slightly different from the results for fully-connected DNNs.

\begin{figure}[htbp]\centering
    \subfigure[]{\label{fig:CnnNumber}      \includegraphics[width=.4\columnwidth,clip]{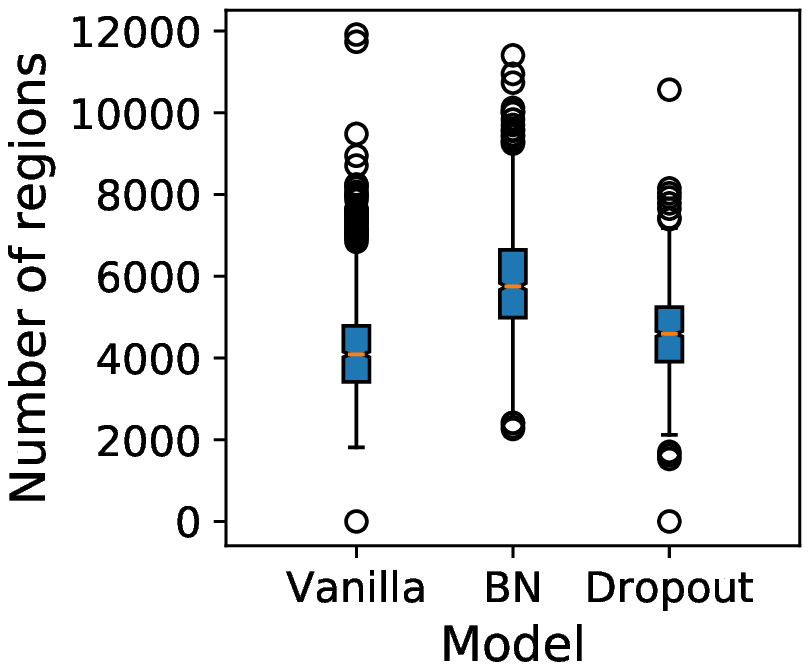}}
    \subfigure[]{\label{fig:CnnLinearity}  \includegraphics[width=.4\columnwidth,clip]{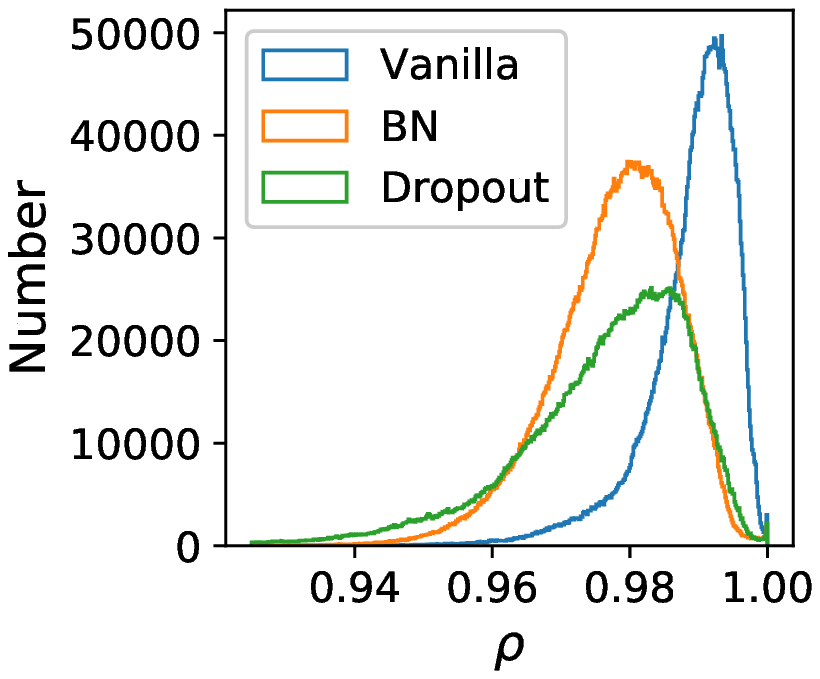}}
\caption{(a) Boxplot of the number of unique surrounding regions for 1000 test points; (b) Distributions of the cosine similarity $\rho$ of all unique surrounding regions for the 1000 test points.}
\end{figure}

\end{document}